\newtheorem{thm}{Theorem}[]
\newcommand{\numc}[1]{\num[group-separator={,}, group-minimum-digits={3}]{#1}}
\newcommand{\ith}{i^\text{th}}
\newcommand{\thetanu}{\theta^\text{nu}}
\newcommand{\thetau}{\theta^\text{u}}
\newcommand{\nun}{{n_{\text{u}}}}
\newcommand{\nnun}{{n_\text{nu}}}
\newcommand{\equalsreason}[1]{\overset{\text{(#1)}}{=}}
\title{Coagent Networks: Generalized and Scaled}
\author{
James E. Kostas\\ University of Massachusetts \And Scott M. Jordan\\ University of Alberta \And Yash Chandak\\ Stanford University \And Georgios Theocharous \thanks{Dr. Theocharous, who sadly passed away before the submission of this paper, made substantial contributions to the research and manuscript.}\\ Adobe Research \And Dhawal Gupta\\ University of Massachusetts \And Martha White\\ University of Alberta \And Bruno Castro da Silva\\ University of Massachusetts \And Philip S. Thomas\\ University of Massachusetts}
\begin{document}

\maketitle

\begin{abstract}
Coagent networks for reinforcement learning (RL) \citep{Thomas2011a} provide a powerful and flexible framework for deriving principled learning rules for arbitrary stochastic neural networks.
%
%
%
The coagent framework offers an alternative to backpropagation-based deep learning (BDL) that overcomes some of backpropagation's main limitations.
For example, coagent networks can compute different parts of the network \emph{asynchronously} (at different rates or at different times), can incorporate non-differentiable components that cannot be used with backpropagation, and can explore at levels higher than their action spaces (that is, they can be designed as hierarchical networks for exploration and/or temporal abstraction).
However, the coagent framework is not just an alternative to BDL; the two approaches can be blended: BDL can be combined with coagent learning rules to create architectures with the advantages of both approaches.
This work generalizes the coagent theory and learning rules provided by previous works; this generalization provides more flexibility for network architecture design within the coagent framework.
This work also studies one of the chief disadvantages of coagent networks: high variance updates for networks that have many coagents and do not use backpropagation.
We show that a coagent algorithm with a policy network that does not use backpropagation can scale to a challenging RL domain with a high-dimensional state and action space (the MuJoCo Ant environment), learning reasonable (although not state-of-the-art) policies.
These contributions motivate and provide a more general theoretical foundation for future work that studies coagent networks.

\end{abstract}

\section{Introduction}
\label{sec:intro}

\emph{Coagent networks} \citep{Thomas2011a} are a class of stochastic neural networks (that is, networks that contain non-deterministic nodes) for reinforcement learning (RL).
These neural networks are composed of stochastic nodes called \emph{coagents} (short for conjugate agents);
each coagent is an RL algorithm.
The coagents cooperatively choose actions.
Therefore, the network as a whole is an RL policy.
See Figure \ref{fig:coagents_basics} for an example coagent network to build intuition.

\begin{figure}[t]
    \includegraphics[width=0.5\linewidth]{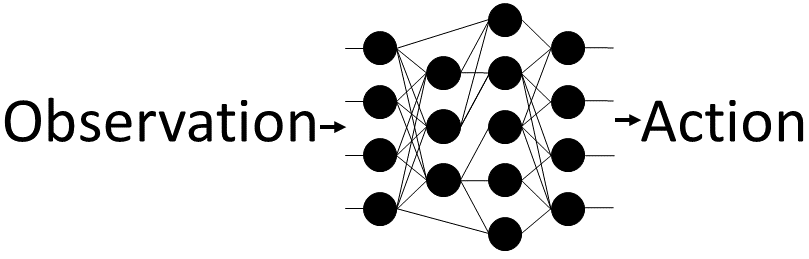} 
    \centering
\caption{A simple coagent network.
Each circle represents a stochastic node called a conjugate agent (coagent).
Each coagent is an RL algorithm that learns and acts cooperatively with the other coagents to choose actions.
As discussed in Sections \ref{sec:intro} and \ref{sec:background}, prior work provides theoretically-grounded learning rules for coagent networks such as this one, including the case where the network is \emph{asynchronous}, that is, the case where network components are not computed simultaneously or at the same rate.
}
\label{fig:coagents_basics}
\end{figure}

Training these networks in a principled way is non-trivial.
We cannot use backpropagation because of the stochastic nodes.
(In special cases, the reparameterization trick may be used, but this is not true in general.)
Prior work provides an answer: Each coagent updates according to a coagent learning rule that causes the coagent network as a whole to follow a principled, theoretically-grounded policy gradient \citep{Thomas2011a}.

Prior work also studies the case where the network is run \emph{asynchronously}, that is, the case where different components of the network execute at different times and/or run at different rates \citep{kostas2020asynchronous}.
Prior work provides theoretically-grounded learning rules for this setting as well.
See Section \ref{sec:background} for a formal introduction to these learning rules.

These prior works provide a powerful and flexible framework for deriving principled learning rules for arbitrary stochastic neural networks.
However, there are a few significant gaps in the literature:
\textbf{1)} Prior theory and learning rules do not address the case where shared parameters are used with asynchronous networks.
Given the prominent role of parameter sharing in modern machine learning (for example, transformers and convolutional networks), this is a major limitation.
\textbf{2)} Prior work does not show that coagent networks can scale well in practice to modern high-dimensional RL benchmark problems.
In this work, we study and address these two gaps; see Section \ref{sec:contributions} below for further discussion.

\subsection{The Coagent Toolbox for Neural Network Design}

This work is basic research that investigates the learning theory and behavior of coagent networks, rather than exploring the advantages the coagent framework offers.
While these advantages are not the focus of this work, they do motivate it, and therefore are briefly outlined below.
For the benefit of readers less familiar with the coagent framework, these advantages are explained in greater detail in supplementary material Section \ref{sec:advantages}.

The coagent framework offers both 1) an alternative to backpropagation-based deep learning (BDL) and 2) a method for combining coagent learning rules with backpropagation, so that the advantages of both approaches can be leveraged for a single network
(see supplementary material Section \ref{sec:combining_backprop} for more details).
For both use cases, the coagent framework provides a principled, theoretically-grounded method for the derivation of learning rules for arbitrary stochastic neural networks for RL.
Specifically, the coagent framework provides neural network designers with a set of powerful, flexible tools that that are unavailable using the BDL approach alone:

\textbf{Non-differentiable components}: Coagent networks can incorporate non-differentiable components that BDL networks cannot use.
This advantage overcomes one of the most restrictive aspects of BDL and opens up a drastically wider range of options for network design.
See supplementary material Section \ref{sec:non-differentiable} for further discussion and examples.

\textbf{Asynchronous networks for temporal abstraction:} Coagent networks can run \emph{asynchronously}, with different parts of a network executing at different times and/or running at different rates, while backpropagation-based networks must be computed synchronously \citep{kostas2020asynchronous}.
This advantage can be used to 1) design networks such that different components run at different rates and/or 2) design network components such as termination functions \citep{sutton1999between}.
In this way, one can use the asynchronous coagent framework as a powerful tool for the design of \emph{arbitrary} option-critic-like \citep{bacon2017option} architectures for temporal abstraction.
See supplementary material Section \ref{sec:temporalAbstraction} for more discussion.

\textbf{Hierarchical architectures for exploration above the level of primitive actions:} Similarly, coagent networks (asynchronous or not) allow for exploration at higher levels than the output layer: since networks have internal nodes that are themselves stochastic RL algorithms, internal components of the network that do not directly affect the action can explore, allowing for hierarchical architectures that facilitate exploration.
See supplementary material Section \ref{sec:hierarchical} for further discussion.

\textbf{Asynchronous networks for distributed computation:} 
Asynchronous coagent networks can also be designed to be distributed and run in parallel.
For example, large networks could be distributed over a CPU cluster and computed asynchronously in a way that cannot be done with BDL (no GPUs needed), or coagent networks that use backpropagation could be run asynchronously across multiple GPUs, without the need for the GPUs to continuously synchronize (as is required for BDL).
See supplementary material Section \ref{sec:distributing} for more discussion.

\subsection{Contributions}
\label{sec:contributions}

This work is basic research that investigates the two previously discussed gaps in prior work.
Specifically, we study the gap in the theory where shared parameters are used with asynchronous networks, and we study the question of whether coagent networks can scale well in practice to a modern high-dimensional RL benchmark problem.
There are two primary contributions:

\textbf{1) A generalized learning rule:} In this work, we generalize the theory of coagent networks, unifying the results of \citet{kostas2020asynchronous} and \citet{zini2020coagent}.
Specifically, we generalize the theory and learning rules to cover the case where \emph{asynchronous} coagent networks use \emph{shared parameters}, where a single learnable parameter may be used in more than one ``location'' in a neural network architecture.
Parameter sharing is a well-established tool for architecture design in many fields of machine learning;
well-known uses of parameter sharing include convolutional filters for computer vision \citep{NIPS1989_53c3bce6} and natural language processing, and parameter sharing for recurrent networks and transformers \citep{attention}.
Parameter sharing has also been used extensively for RL specifically.
For example, BDL architectures for RL pixel-input problems often use convolutional neural networks \citep{mnih2015human}, and the well-known decision transformer \citep{decision_transformer} utilizes the transformer architecture.
Thus, this work's generalized theory eliminates a major limitation of the coagent framework.
This generalization gives users even more freedom to design arbitrary stochastic architectures that have principled, theoretically-grounded learning rules.

\textbf{2) Large backpropagation-free coagent policy networks can scale to a challenging problem:}
Despite the advantages of coagent networks, there are legitimate concerns about the practicality of these networks.
Prior work does not show that coagent networks can successfully learn high-dimensional modern benchmark RL domains (for example, the MuJoCo environments) or even scale to the large policy network sizes needed for such problems.
Worse, there is good reason to think that these networks cannot scale practically due to the structural credit assignment problem:
The foundational coagent learning rule is unbiased but can give high variance gradient estimates, and this variance becomes larger as the number of coagents in the network increases.
While this is not a serious concern when using small coagent networks or large coagents networks with few coagents (one straightforward approach to designing the latter class is to combine coagent learning rules with backpropagation), the variance of the gradient updates is a more serious concern when using large coagent networks with many coagents and no backpropagation.

This work addresses these concerns by showing that even a large coagent policy network that does not use backpropagation can, in fact, successfully scale to a challenging high-dimensional problem: the MuJoCo Ant environment.
This is a worst-case design choice with respect to the concern about structural credit assignment, since this choice increases the number of coagents, thus increasing the variance of the gradient estimate.
This result demonstrates that coagent networks are scalable, even in this ``worst-case'' many-coagent-no-backpropagation case.
Our hope is that these results motivate further research of coagent networks and their uses for neural network design.

\section{Related Work}
\label{sec:related_work}

Backpropagation-based deep learning (BDL) networks are behind many of the recent exciting breakthroughs in RL for simulated robotics problems or game-playing.
For example, \citet{mnih2013playing} and \citet{mnih2015human} introduced the deep Q-learning algorithm (DQN) and showed that their algorithm and BDL architecture can learn to play Atari games using pixel-only observations.
\citet{schulman2017proximal} and \citet{haarnoja2018soft} proposed the proximal policy optimization (PPO) and soft actor-critic (SAC) algorithms, respectively; these algorithms are two of the most popular RL algorithms at this time and are both designed specifically for use with BDL.
Both papers used MuJoCo domains to demonstrate the effectiveness of their algorithms.
\citet{silver2017mastering} showed that a BDL-based RL (DRL) algorithm can learn to beat a world champion in the game of Go (a game that, until recently, was considered to be largely unsolved by the artificial intelligence community).
\citet{wurman2022outracing} showed that a DRL algorithm can learn to outperform a human champion in the game of Gran Turismo.
Our work is inspired both by the successes of DRL and its limitations; coagent networks offer a connectionist alternative to DRL that may overcome some of DRL's limitations.
Additionally, by including BDL networks as components of the coagent networks, coagent networks may add new capabilities to existing DRL algorithms and techniques.

\citet{barto1985learning}, \citet{williams1992simple}, and \citet{narendra1989learning} proposed learning rules for stochastic neural networks; the motivations and approaches of these works are closely related to the motivations and approaches of coagent learning rules.
Building on these ideas, \citet{Thomas2011a} proposed coagent networks for RL, and \citet{Thomas2011b} developed the framework further by studying coagent learning rules that are based on policy gradient algorithms.

Several more recent works also study coagent networks.
\citet{kostas2020asynchronous} introduced asynchronous coagent networks.
\citet{zini2020coagent} generalized previous coagent theory and gave a learning rule for the case where network parameters are shared.
They also introduced and studied a type of coagent network called the Feedforward Option Network.
\citet{gupta2021structural} studied the structural credit assignment problem for coagent learning rules and coagent networks in the supervised learning setting.
They analyzed a variety of coagent learning rules, critics, coagent architectures, and many other design choices.
\citet{chung2021} proposed the MAP propagation algorithm.
This algorithm is closely related to coagent algorithms; MAP propagation reduces the variance of coagent updates at the cost of computational efficiency and bias.

\section{Background}
\label{sec:background}

We study a Markov decision process (MDP)
$M=(\mathcal S, \mathcal A, \mathcal R, P, R, d_0,\gamma)$, where $\mathcal S$ is the state space, $\mathcal A$ is the action space, $\mathcal R$ is the space of possible rewards, $P$ is the transition function, $R$ is the reward function, $d_0$ is the initial state distribution, and $\gamma \in [0,1]$ is the reward discount parameter.
Let $t \in \{0, 1, 2, \dotsc\}$ represent the time step.
Let $S_t$, $A_t$, and $R_t$ be random variables that represent the state, action, and reward respectively at time $t$ and take values in $\mathcal S$, $\mathcal A,$ and $\mathcal R$ respectively.
The transition function, $P{:} \mathcal S {\times} \mathcal A {\times} \mathcal S {\to} [0, 1]$, is defined as $P(s,a,s')\coloneqq\Pr(S_{t+1}{=}s'|S_t{=}s,A_t{=}a)$.
The reward function $R{:} \mathcal S {\times} \mathcal A {\times} \mathcal S {\times} \mathcal R {\to} [0, 1]$, is defined as $R(s,a,s',r)\coloneqq\Pr(R_t{=}r|S_t{=}s,A_t{=}a,S_{t+1}{=}s')$.
The initial state distribution, $d_0{:}\mathcal S \to [0, 1]$, is defined as $d_0(s)\coloneqq\Pr(S_0{=}s)$.
Let $\Theta$ be the feasible set.
A policy, $\pi:\mathcal S \times \mathcal A \times \Theta \to [0,1]$ is a method of selecting actions given the state, defined as $\pi(s,a, \theta)\coloneqq\Pr(A_t{=}a|S_t{=}s)$, where the distribution is parameterized by $\theta$. 
We assume that all $\theta \in \Theta$ are vectors of real-valued numbers of length $n$; that is, $\Theta \subseteq \mathbb R^n$.
%
%
The discounted return at and after timestep $t$, $G_t$, is $G_t \coloneqq \sum_{k=0}^\infty\gamma^{k} R_{t+k}$. 
The typical goal of an RL algorithm is to maximize the objective function $J:\Theta \to \mathbb R$, which is defined as $J(\theta)\coloneqq\mathbf{E}\left [\sum_{t=0}^\infty \gamma^t R_t\right].$ 

\emph{Policy gradient algorithms} are a popular class of RL algorithms which aim to maximize the objective function by performing gradient ascent on the objective using estimates of $\nabla J(\theta)$.

\subsection{Coagent Networks}

\emph{Coagent networks} are stochastic feedforward networks for RL, consisting of a network of conjugate agents, or \emph{coagents}, each of which executes an RL algorithm based on its inputs and outputs; together, these coagents form a coagent network.
\citet{Thomas2011a} give principled learning rules for these networks.
Let $m$ to be the number of coagents in the network.
Since a coagent network is feedforward, we enumerate the coagents that comprise the network in order of computation: coagent $1$, coagent $2, \dotsc,$ coagent $m$.

Given the coagent network parameters $\theta \in \Theta$, we write the $\ith$ coagent's parameters as $\tilde \theta_i \in \tilde \Theta_i$, where $\tilde \Theta_i \subseteq \mathbb R^{n_i}$ is the corresponding feasible set and $n_i$ is the number of parameters this coagent has.
We use the tilde above the $\tilde \theta_i$ and $\tilde \Theta_i$ symbols to denote that we are referring to the vector that constitutes the $\ith$ coagent's parameters (for $i \in \{1, 2, \dotsc, m\}$), as opposed to the $\ith$ parameter in $\theta$, which is a real number that we denote without a tilde: $\theta_i \in \mathbb R$, for $i \in \{1, 2, \dotsc, n\}$.
So $\theta = [\tilde \theta_1^\intercal, \tilde \theta_2^\intercal, \dotsc, \tilde \theta_m^\intercal]^\intercal = [\theta_1, \theta_2, \dotsc, \theta_n]^\intercal$.

Let $\mathcal U_i$ be the output space of the $\ith$ coagent; this can be thought of as the coagent's ``action space''.
Let $U^i_t \in \mathcal U_i$ be the random variable that represents the $\ith$ coagent's output at timestep $t$.

The input space of the $\ith$ coagent, $\mathcal X_i$, consists of the state space of $M$ and the outputs of the previous coagents; define this space as $\mathcal X_i \coloneqq \mathcal S \times \mathcal U_1 \times \mathcal U_2 \times \dotsc \times \mathcal U_{i-1}$.
Let $X^i_t \in \mathcal X_i$ be the random variable that represents the $\ith$ coagent's input at timestep $t$.
Note that the $\ith$ coagent need \emph{not} compute a function that is affected by the entire variable $X^i_t$: if network structure dictates that a coagent is not directly affected by some or all of the state space or some or all of the previous coagents' outputs, this theory still encapsulates the network structure; the function $\pi_i$ simply ``ignores'' parts of $\mathcal X_i$ as required.

The policy of the $\ith$ coagent, $\pi_i:\mathcal X_i \times \mathcal U_i \times \tilde \Theta_i \to [0,1]$ is a method of selecting actions given the state, defined as $\pi_i(x,u, \tilde \theta_i)\coloneqq\Pr(U_t{=}u|X_t{=}x)$.

\subsection{Asynchronous Continuous-Time Coagent Networks}

\citet{kostas2020asynchronous} study the reinforcement learning setting in which each time step in the MDP $M$ is broken up into $n_\text{atomic} \in \mathbb Z^+$ \emph{atomic time steps}, where $\mathbb Z^+$ denotes the positive integers.
This formulation allows for arbitrarily-precise approximations of continuous-time \emph{asynchronous} coagent networks.
In other words, different parts of the networks may be run at different rates to facilitate temporal abstraction \citep{bacon2017option}; coagent networks with cycles (recurrent networks) may be designed; and/or coagent networks may be distributed across CPU clusters or the internet and run asynchronously.
In this setting, at each atomic time step, each coagent either updates its output or does not; we call this process \emph{execution}.
That is, if the $\ith$ coagent executes at a given atomic time, it computes some output from its input using its policy $\pi_i$.
If a coagent does not execute at a given atomic time step, its output remains the same as at the previous atomic time step.

In this setting, each coagent has an \emph{execution function}, which defines the probability of execution given the coagent's inputs.
In this work, we do not formalize the execution function (see the work of \citet{kostas2020asynchronous} for a more formal treatment), since it is not necessary for our results, and instead simply let the random variable $E^i_t \in \{0, 1\}$ represent whether the $\ith$ coagent executes at atomic time step $t$; $E^i_t$ takes a value of $1$ if the coagent executes and $0$ otherwise.

The \emph{asynchronous local policy gradient} for coagent $i$ is defined as:

\begin{equation}
    \Delta_i(\tilde \theta_i) \coloneqq
    \mathbf{E} \left[ \sum_{t=0}^\infty E^i_t \gamma^t G_t 
    %
    \frac{\partial \ln \big( \pi_i \big ( X_t, U_t, \tilde \theta_i \big ) \big )}{\partial \theta_i} \right],
\end{equation}
where $t$ refers to atomic time steps.

\citet{kostas2020asynchronous} prove that the policy gradient of an asynchronous coagent network is composed of the coagents' asynchronous local policy gradients:

\begin{equation}
\label{thm:acpgt}
    \nabla J(\theta)=\left [\Delta_1(\tilde \theta_1)^\intercal , \Delta_2(\tilde \theta_2)^\intercal,\dotsc,\Delta_m(\tilde \theta_m)^\intercal\right].
\end{equation}

Notice that this theorem provides us with the $n$ real-valued components of the policy gradient:

\begin{align}
\label{thm:acpgt_individual}
    \nabla J(\theta)=&\left [\Delta_1(\tilde \theta_1)^\intercal , \Delta_2(\tilde \theta_2)^\intercal,\dotsc,\Delta_m(\tilde \theta_m)^\intercal\right ]\\
    =& \left[\frac{\partial J(\theta)}{\partial \theta_1}, \frac{\partial J(\theta)}{\partial \theta_2}, \dotsc, \frac{\partial J(\theta)}{\partial \theta_n}\right].
\end{align}

In this paper, we use the convention that vectors are column vectors by default, but that vectors with comma-separated values are row vectors, so all three vectors in the equations above are $1 \times n$ row vectors (since they are the Jacobian of $J$).

\section{The Shared Parameter Asynchronous Coagent Policy Gradient Theorem}

In this section, we give a generalized coagent policy gradient theorem that covers both \textbf{1)} the case of (continuous-time) asynchronous and recurrent networks and \textbf{2)} the case of networks with shared parameters.
\citet{kostas2020asynchronous} give the asynchronous proof but do not cover the case of shared parameters, and \citet{zini2020coagent} give the shared parameter proof but do not cover the case of asynchronous networks.
Note that \citet{zini2020coagent} do discuss how to extend their result to the asynchronous setting (see Remark 12 and Appendix B.1.1).
However, they do not provide a formal proof, and it is not immediately clear that the route they suggest (applying the logic of \citet{kostas2020asynchronous} to their synchronous theorem) would be correct and straightforward, since the two papers use significantly different formulations of the coagent setting.
Therefore, in this section, we formally prove the result for the asynchronous shared parameter setting, which unifies the results of \citet{kostas2020asynchronous} and \citet{zini2020coagent}.

We define the number \emph{unique parameters} to be the number of network parameters, not counting shared ``duplicate'' parameters.
We define the number of \emph{non-unique parameters} to be the number of network parameters, counting shared ``duplicates''.
For example, if a simple convolutional network consisted of a single $3 \times 3$ nine-parameter filter that is applied to the input $64$ times, this network would have nine unique parameters and $9 (64) = 576$ non-unique parameters.

Suppose we wish to analyze the setting where an asynchronous coagent network is designed to share parameters, and has $\nun$ unique parameters, and $\nnun$ non-unique parameters, for $\nun \leq \nnun$.
Denote the vector of unique parameters as $\thetau \in \mathbb R^\nun$ and the vector of non-unique parameters as $\thetanu \in \mathbb R^\nnun$.
Notice that for any given architecture, there exists a function that, given the unique parameters, outputs the non-unique parameters.
We denote this function as $f:\mathbb R^\nun \to \mathbb R^\nnun$ such that $f(\thetau) \coloneqq \thetanu$.
Similarly, there exists a function that takes the index of a unique parameter in $\{1, 2, \dotsc, \nun\}$ and produces the set of corresponding non-unique indices in $\mathcal P(\{1, 2, \dotsc, \nnun\})$ (where $\mathcal P$ denotes the powerset);
we denote this function as $g:\{1, 2, \dotsc, \nun\} \to \mathcal P(\{1, 2, \dotsc, \nnun\})$.

The shared parameter asynchronous coagent policy gradient theorem (SPAT) states that the policy gradient with respect to the unique parameters consists of $\nun$ components, such that each corresponding component is the sum of the gradients of the corresponding component(s) of the non-unique policy gradient given by \eqref{thm:acpgt_individual}.

For example, consider the simple case where there are three non-unique parameters, two unique parameters, and the second and third non-unique parameters are shared.
That is, $\thetanu = [\thetanu_1, \thetanu_2, \thetanu_3]^\intercal$, $\thetau = [\thetau_1, \thetau_2]^\intercal$, $\thetau_1 = \thetanu_1$, and $\thetau_2 = \thetanu_2 = \thetanu_3$.
We know from \eqref{thm:acpgt_individual} that, if no parameters were shared, the policy gradient $\nabla J(\thetanu)$ would be a vector with three components given by \eqref{thm:acpgt_individual}: $\nabla J(\thetanu) = [\frac{\partial J(\theta)}{\partial \thetanu_1}, \frac{\partial J(\theta)}{\partial \thetanu_2}, \frac{\partial J(\theta)}{\partial \thetanu_3}]$.
The SPAT states that the two-component vector that is the policy gradient for the unique parameters is simply the sum of the gradients of the corresponding component(s) in the non-unique policy gradient: $\frac{\partial J(f(\thetau))}{\partial \thetau} = [\frac{\partial J(\theta)}{\partial \thetanu_1}, \frac{\partial J(\theta)}{\partial \thetanu_2} + \frac{\partial J(\theta)}{\partial \thetanu_3}]$.

\begin{thm}[Shared Parameters Asynchronous Coagent Policy Gradient Theorem] \label{thm:spat} 
$$\frac{\partial J(f(\thetau))}{\partial \thetau} = \left[\sum_{i \in g(1)}\frac{\partial J(f(\thetau))}{\partial \thetanu_i}, \sum_{i \in g(2)}\frac{\partial J(f(\thetau))}{\partial \thetanu_i}, \dotsc, \sum_{i \in g(\nun)}\frac{\partial J(f(\thetau))}{\partial \thetanu_i}\right].$$
\end{thm}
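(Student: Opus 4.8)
The plan is to reduce this result to a standard multivariable chain rule computation, where the sharing function $f$ plays the role of the inner map and $J$ the outer function. The key observation is that $J$ as a function of the unique parameters is really $J \circ f$, the composition of the (already-understood) objective on non-unique parameters with the duplication map $f$. So the entire statement should follow from differentiating this composition and then interpreting the resulting Jacobian-vector products using the structure of $f$.

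First I would make the role of $f$ fully explicit. Since $f$ merely copies each unique parameter $\thetau_j$ into every non-unique slot whose index lies in $g(j)$, the $i^\text{th}$ component of $f$ is $f_i(\thetau) = \thetau_{j}$, where $j$ is the unique index for which $i \in g(j)$ (this $j$ is unique because the sets $g(1), \dotsc, g(\nun)$ partition $\{1, \dotsc, \nnun\}$). Consequently the Jacobian $\partial f / \partial \thetau$ is a $\nnun \times \nun$ zero-one matrix whose $(i,j)$ entry is $1$ exactly when $i \in g(j)$ and $0$ otherwise. This matrix encodes the sharing pattern, and writing it down cleanly is the conceptual heart of the argument.

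Next I would apply the chain rule to $J(f(\thetau))$. Treating the left factor $\partial J / \partial \thetanu$ as the $1 \times \nnun$ row vector supplied by Theorem~\eqref{thm:acpgt_individual} (the asynchronous coagent policy gradient, evaluated at $\thetanu = f(\thetau)$), the chain rule gives
$$
\frac{\partial J(f(\thetau))}{\partial \thetau} = \frac{\partial J(f(\thetau))}{\partial \thetanu}\,\frac{\partial f(\thetau)}{\partial \thetau}.
$$
Reading off the $j^\text{th}$ entry of this product and substituting the zero-one Jacobian from the previous step immediately collapses the matrix multiplication into a selection-and-sum: the $j^\text{th}$ component becomes $\sum_{i} \frac{\partial J(f(\thetau))}{\partial \thetanu_i} \cdot \mathbf 1[i \in g(j)] = \sum_{i \in g(j)} \frac{\partial J(f(\thetau))}{\partial \thetanu_i}$, which is exactly the claimed expression. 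The crucial point enabling this is that Theorem~\eqref{thm:acpgt_individual} already certifies that the non-unique gradient $\partial J / \partial \thetanu$ equals the concatenation of the asynchronous local policy gradients, so I may treat $J$ as a genuinely differentiable function of $\thetanu$ and invoke the chain rule without re-deriving anything about the asynchronous dynamics.

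The main obstacle I anticipate is purely a matter of rigor rather than difficulty: I must justify that the chain rule applies, which requires that $J \circ f$ be differentiable in $\thetau$ and that $f$ be differentiable (indeed linear) in $\thetau$. The latter is trivial since $f$ is a coordinate-duplication linear map; the former rests on the differentiability of $J$ in $\thetanu$, which is exactly what \citet{kostas2020asynchronous} establish via \eqref{thm:acpgt_individual}. A secondary subtlety is bookkeeping: I should confirm that $\{g(j)\}_{j=1}^{\nun}$ forms a partition of $\{1, \dotsc, \nnun\}$ (each non-unique index derives from exactly one unique parameter), since this is what guarantees both that $f$ is well-defined and that no cross-terms survive in the chain-rule sum. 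Once these two facts are stated, the computation is essentially immediate, and the worked three-parameter example in the text already illustrates the resulting collapse.
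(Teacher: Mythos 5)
Your proposal is correct and follows essentially the same route as the paper's proof: apply the chain rule to $J \circ f$, identify $\partial f/\partial \thetau$ as the $\nnun \times \nun$ zero-one matrix encoding the sharing pattern, and collapse the product into the sums over $g(j)$, with \eqref{thm:acpgt_individual} supplying the non-unique gradient. Your added remarks on differentiability and on $\{g(j)\}$ partitioning the non-unique indices are reasonable points of rigor that the paper leaves implicit, but they do not change the argument.
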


\begin{proof}
\begin{align*}
    \frac{\partial J(f(\thetau))}{\partial \thetau} \equalsreason{a}& \frac{\partial J(f(\thetau))}{\partial f(\thetau)} \frac{\partial f(\thetau)}{\partial \thetau} \\
    \equalsreason{b}& \frac{\partial J(\thetanu)}{\partial \thetanu} \frac{\partial f(\thetau)}{\partial \thetau},
\end{align*}
where \textbf{(a)} follows from the chain rule, and \textbf{(b)} follows from the fact that $f(\thetau) \coloneqq \thetanu.$

Notice that $\frac{\partial J(\thetanu)}{\partial \thetanu}$ is a $1 \times \nnun$ row vector given by \eqref{thm:acpgt_individual}, and $\frac{\partial f(\thetau)}{\partial \thetau}$ is a $\nnun \times \nun$ one-hot matrix, each entry of which in $\{0, 1\}$ indicates whether the unique index (column index) corresponds to the non-unique index (row index).
Next, we define these one-hot entries: for all $x \in \{1, 2, \dotsc, \nun\}$, $y \in \{1, 2, \dotsc, \nnun\}$, let $H_{xy} \in \{0, 1\}$ be the indicator variable that indicates whether the unique parameter with index $x$ corresponds with the non-unique parameter with index $y$.
By inspection, 

\begin{align*}
    \frac{\partial J(f(\thetau))}{\partial \thetau} =&
    \left[\frac{\partial J(f(\thetau))}{\partial \thetanu_1}, \frac{\partial J(f(\thetau))}{\partial \thetanu_2}, \dotsc, \frac{\partial J(f(\thetau))}{\partial \thetanu_n}\right] \frac{\partial f(\thetau)}{\partial \thetau}\\
    =& \left[\frac{\partial J(f(\thetau))}{\partial \thetanu_1}, \frac{\partial J(f(\thetau))}{\partial \thetanu_2}, \dotsc, \frac{\partial J(f(\thetau))}{\partial \thetanu_n}\right] 
    \begin{bmatrix}
    H_{11} & H_{21} & \cdots & H_{\nun1}\\
    H_{12} & H_{22} & & \vdots \\
    \vdots & & \ddots \\
    H_{1\nnun} & \cdots & & H_{\nun\nnun}
\end{bmatrix}\\
    =& \left[\sum_{i \in g(1)}\frac{\partial J(f(\thetau))}{\partial \thetanu_i}, \sum_{i \in g(2)}\frac{\partial J(f(\thetau))}{\partial \thetanu_i}, \dotsc, \sum_{i \in g(\nun)}\frac{\partial J(f(\thetau))}{\partial \thetanu_i}\right].
\end{align*}
See the illustrative example below for more intuition regarding this final step.
\end{proof}

We provide an example to make the final step of the proof more intuitive, using the example two-unique-parameter/three-non-unique-parameter coagent network above:

\begin{align*}
    \frac{\partial J(f(\thetau))}{\partial \thetau} =&
    \left[\frac{\partial J(f(\thetau))}{\partial \thetanu_1}, \frac{\partial J(f(\thetau))}{\partial \thetanu_2}, \frac{\partial J(f(\thetau))}{\partial \thetanu_3}\right] \frac{\partial f(\thetau)}{\partial \thetau}\\
    =& \left[\frac{\partial J(f(\thetau))}{\partial \thetanu_1}, \frac{\partial J(f(\thetau))}{\partial \thetanu_2}, \frac{\partial J(f(\thetau))}{\partial \thetanu_3}\right] \begin{bmatrix}
    1 & 0\\
    0 & 1 \\
    0 & 1 \\
\end{bmatrix}\\
=& \left[\frac{\partial J(f(\thetau))}{\partial \thetanu_1}, \frac{\partial J(f(\thetau))}{\partial \thetanu_2} + \frac{\partial J(f(\thetau))}{\partial \thetanu_3}\right].
\end{align*}

\subsection{Example Update Rule}

As a simple example of a learning rule based on Theorem \ref{thm:spat}, consider the REINFORCE \citep{williams1992simple} analogue for the asynchronous shared weights setting.
We first give detailed step-by-step instructions to build intuition, and then provide a succinct summary in the form of pseudocode.

First, after each episode, compute the asynchronous update to each coagent's parameters, as described by \citet{kostas2020asynchronous} (we will denote this gradient estimate for coagent $i$ as $\widetilde \nabla_i$):
\begin{align*}
\widetilde \nabla_i = \!\!\!\!\!\! \sum_{t \in \{t^i_1, t^i_2, \dotsc\}} \!\!\!\!\!\!\!\! \gamma^t G_t \left(\frac{\partial \ln\left ( \pi_i\left ( X^i_t, U^i_t, \tilde \theta_i \right ) \right )}{\partial \tilde \theta_i}\right),
\end{align*}
where the elements of $\{t^i_1, t^i_2, \dotsc\}$ are the times that the $\ith$ coagent executed during this episode.
Notice that this vector has the length of the number of \emph{non-unique} parameters of coagent $i$.
Next, update as normal, but, for any shared parameters, simply \emph{sum} the contributions of each relevant $\widetilde \nabla_i$.
Stating this last step more formally:

Compute the non-unique gradient vector (of length $\nnun$):
\begin{align*}
    \widetilde \nabla^{\text{nu}} \gets [\widetilde \nabla_1^\intercal, \widetilde \nabla_2^\intercal, \cdots, \widetilde \nabla_m^\intercal],
\end{align*}
and then use that to compute the unique gradient vector (of length $\nun$):
\begin{align*}
    \widetilde \nabla^{\text{u}} \gets \left[\sum_{i \in g(1)} \widetilde{ \nabla}^{\text{nu}}_i, \sum_{i \in g(2)} \widetilde{ \nabla}^{\text{nu}}_i, \dotsc, \sum_{i \in g(\nun)} \widetilde{ \nabla}^{\text{nu}}_i \right],
\end{align*}
where $ \widetilde{ \nabla}^{\text{nu}}_i$ is the $\ith$ component of $\widetilde \nabla^{\text{nu}}$.
Then proceed with the gradient update:
\begin{align*}
    \thetau \gets \thetau + \alpha \widetilde \nabla^{\text{u}},
\end{align*}
where $\alpha$ is the learning rate.

This learning rule can be concisely summarized by the following pseudocode (all terms defined above):

\begin{algorithm}

    \For{$i = \{1, 2, \dotsc, m\}$}{
        $\widetilde \nabla_i = \sum_{t \in \{t_1, t_2, \dotsc\}} \gamma^t G_t \left(\frac{\partial \ln\left ( \pi_i\left ( X^i_t, U^i_t, \tilde \theta_i \right ) \right )}{\partial \tilde \theta_i}\right);$ \# Calculate the gradient for each coagent}

    $\widetilde \nabla^{\text{nu}} \gets [\widetilde \nabla_1^\intercal, \widetilde \nabla_2^\intercal, \cdots, \widetilde \nabla_m^\intercal]$; \# Combine these terms to calculate what the gradient would be if no parameters were shared

    $\widetilde \nabla^{\text{u}} \gets \left[\sum_{i \in g(1)} \widetilde{ \nabla}^{\text{nu}}_i, \sum_{i \in g(2)} \widetilde{ \nabla}^{\text{nu}}_i, \dotsc, \sum_{i \in g(\nun)} \widetilde{ \nabla}^{\text{nu}}_i \right]$; \# Account for parameter sharing

    $\thetau \gets \thetau + \alpha \widetilde \nabla^{\text{u}}$; \# Update network parameters
    \caption{Pseudocode for REINFORCE Analogue}
    \label{alg:reinforce}
\end{algorithm}

\section{Experiments}

Previous work does not show that coagent networks can scale to large networks or to challenging high-dimensional domains such as MuJoCo environments.
As discussed in Section \ref{sec:intro}, this is particularly concerning, since while coagent learning rules will give the correct gradient in expectation, the variance of the gradient will increase as the number of coagents increases.

In this section, we show that even a backpropagation-free actor coagent network with many coagents can in fact scale and achieve reasonable results for the MuJoCo Ant v2 environment \citep{todorov2012mujoco}.
The Ant environment has an 111-dimensional state space and an 8-dimensional action space, and requires the RL agent to learn to walk a quadrupedal ``ant'' forward.

The coagent network's learning rule is based on a simple actor-critic algorithm.
The actor network is a two-layer network comprised of $40$ coagents.
Each coagent is a linear function approximator with softmax action selection.
The first layer is comprised of $32$ 224-parameter coagents, each of which outputs a binary signal to the second layer.
The second layer is comprised of eight 363-parameter coagents corresponding to the eight-dimensional action space of the environment.

For the state-of-the-art (SOTA) returns that we provide for comparison, we trained $30$ BDL agents trained with proximal policy optimization (PPO).
We used an implementation based on the Clean-RL library \citep{cleanrl}.

For statistical significance, we use 100 trials (that is, ``runs'' or ``random seeds'') for the coagent network, and 30 trials for the SOTA agents.
For more algorithm and experimental details, see supplementary material Section \ref{sec:experiment_details}.

\subsection{Aims}

\begin{figure}
    \includegraphics[width=0.7\linewidth]{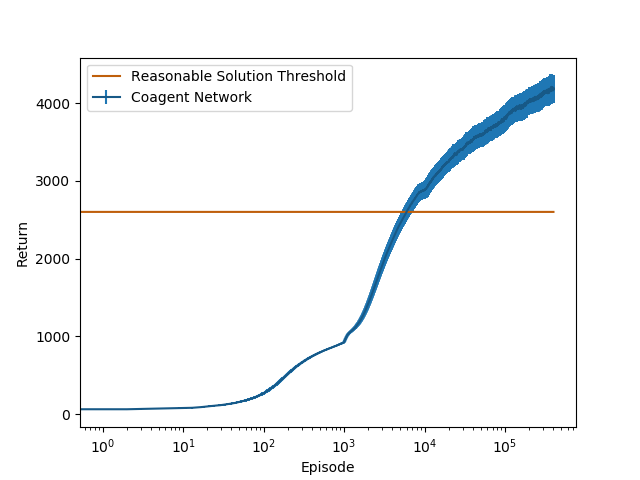}
    \centering
\caption{
This plot displays the coagent networks' learning curve.
To make the plot less noisy, the learning curve displays the moving average over 1000 episodes.
For statistical significance, this curve was generated from 100 trials (that is, 100 runs or seeds).
Error bars indicate standard error.
(The standard error bars are small, since these results have a high degree of statistical significance, due to the number of trials run.)
}
\label{fig:results}
\end{figure}

Note that our goal is \emph{not} to study whether coagents networks’ performance and sample efficiency could match those of SOTA DRL algorithms:
BDL and DRL research and engineering efforts involving a large community of researchers and engineers have been ongoing for years for DRL and decades for BDL, so it would be deeply surprising if coagent networks achieved comparable performance and sample efficiency (since, to the best of our knowledge, this is the first attempt to scale them up to RL problems like high-dimensional MuJoCo domains).
There are many engineering tricks and tweaks used in DRL implementations (see, for example, the more than 50 engineering ``choices'' that \citet{andrychowicz2021matters} study for PPO, many of which are present in the PPO implementation that we use) while our coagent algorithm is a relatively elegant and simple one.
One important example is experience replay (for many popular off-policy DRL algorithms) and/or training multiple times on same data (which the PPO algorithm, which we use for the SOTA data, does), while our algorithm uses a simple actor-critic algorithm that uses each data point only once.

Using the typical DRL engineering techniques would be counterproductive in this work, since our goal is to conduct basic research about the scaling of coagent networks, not to achieve SOTA sample-efficiency and performance, and adding dozens of confounding engineering techniques to the algorithm would defeat this purpose.
However, in future more applied work, techniques like these may result in substantial improvements to sample efficiency and asymptotic performance for practical applications of coagent networks.
See supplementary material Section \ref{sec:experiment_details} for algorithm details; these details demonstrate the relative simplicity and elegance of our algorithm compared to typical DRL implementations.
However, while our goal was not to study whether coagent networks could learn SOTA policies, we do include SOTA results in Table \ref{table:results} for completeness.

Based on visualizations of the environment, policies which give average returns of roughly $\numc{2600}$ (with reasonably low variance) typically indicate a reasonable policy that is able to walk the Ant forward with a reasonable gait at a reasonable speed.
Below, we refer to this return threshold as the \emph{reasonable solution threshold} (RST).
For visualizations which build intuition for this choice of a threshold, see \url{https://sites.google.com/view/coagent-videos/home}.
Our aim was to study whether a backpropagation-free actor with many coagents could learn policies with mean returns of at least the RST.

\begin{table}
  \centering
  \begin{tabular}{|c|c|c|c|}
    \hline
    & RST & Coagent Network & SOTA \\
    \hline
    Average Return & 2600 & 4187 & 5742 \\
    \hline
    Standard Error (Standard Deviation) & - & 163 (1630) & 320 (1755) \\
    \hline
  \end{tabular}
  \caption{The mean, standard error, and standard deviation of the returns.
  The coagent data is generated from 100 trials; the mean and standard deviation are based on the last 1000 episodes of each trial.
  The state of the art (SOTA) return is based on 30 trials of a DRL PPO agent; after training, each policy was run for 1000 episodes each to generate this data.
  }
  \label{table:results}
\end{table}

\subsection{Results}

%

Our results are shown in Table \ref{table:results} and Figure \ref{fig:results}; the coagent networks learn policies with an average return of nearly \numc{4200}, which corresponds to effective and efficient solutions to this challenging problem well above the RST.
These results demonstrate that a backpropagation-free actor coagent network can scale effectively to this problem.

\section{Conclusion}

In this work, we generalized and unified the theory of previous coagent learning rules.
We also demonstrated that even coagent networks with backpropagation-free actors can scale effectively to a challenging, high-dimensional environment.

\bibliographystyle{unsrtnat}
\bibliography{references}

\newpage
\appendix

\section{Uses of the Coagent Framework For Neural Network Design}
\label{sec:advantages}

This work focuses on conducting fundamental research to investigate the learning theory and behavior of coagent networks.
Because the benefits of the coagent framework are the motivation for this work, rather than its contribution, we do not focus on these benefits in the main body of the work.
These benefits are explained in greater detail in this section for the benefit of readers less familiar with the coagent framework and its uses and potential advantages.

\subsection{Asynchronous Coagent Networks for Temporal Abstraction}
\label{sec:temporalAbstraction}

Asynchronous coagent networks can be designed for the purposes of temporal abstraction.
By designing networks so that different components run at different rates, and/or designing components such as termination functions \citep{sutton1999between}, one can use the (asynchronous) coagent framework as a powerful tool for the design of \emph{arbitrary} option-critic-like \citep{bacon2017option} architectures for temporal abstraction.

Consider Figure \ref{fig:async_net}, which showcases the enormous flexibility of the coagent framework's ``asynchronous'' aspect for network design.
The probability of a node (coagent) computing at a given instant (``executing'') may be designed to be deterministic or stochastic, and may even depend on the state space or the outputs of the other coagents (or even that coagent's own output).
This means that this framework generalizes the options framework \citep{sutton1999between} and facilitates the design of arbitrary networks like the option-critic \citep{bacon2017option}.

\begin{figure}[h]
    \includegraphics[width=0.7\linewidth]{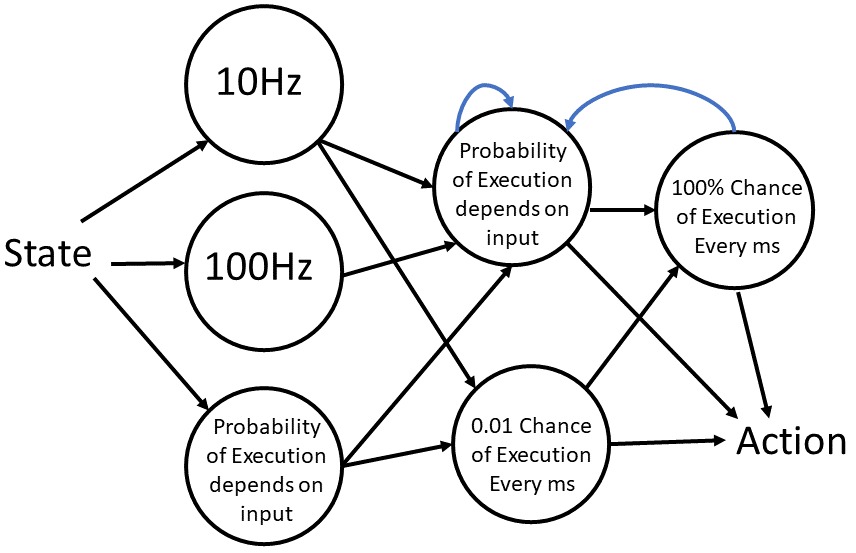}
    \centering
\caption{An illustrative example to show the flexibility asynchronous network design.
Blue connections indicate recurrent (that is, cyclic) connections.
}
\label{fig:async_net}
\end{figure}

\citet{kostas2020asynchronous} focus primarily on the temporal abstraction use-case; see their work for further discussion.

\subsection{Distributing Asynchronous Coagent Networks on Hardware}
\label{sec:distributing}

The asynchronous formulation \citep{kostas2020asynchronous} may be used to distribute and run coagent networks in parallel.
A few interesting potential uses:
\begin{itemize}
    \item Large networks could be distributed over a CPU cluster and computed asynchronously in a way that cannot be done with BDL.
    \item Neural networks that use backpropagation could be run asynchronously across multiple GPUs, without the need for the GPUs to synchronize on every timestep or forward/backward pass (as is required for standard BDL).
    \item Neural networks could be distributed over computer networks or over the internet (with different networks components on different servers) and run asynchronously.
\end{itemize}

Note that, while some BDL architectures may be partially parallelized, these optimizations are specific to the architecture.
Specifically, if component B of a BDL network takes the output of component A, then component A must be computed before component B.
Furthermore, these components must always be computed the same number of times (first component A, then component B, each time).
Asynchronous coagent networks do not share these limitations.

\subsection{Non-Differentiable Network Components}
\label{sec:non-differentiable}

Another major limitation of BDL is the requirement that the network be differentiable and deterministic.
While some non-deterministic components can be incorporated using the well-known reparameterization trick, this trick does not generalize to many types of stochastic components.
For example, the binary nodes that we use in this paper's experiments cannot be handled by the reparameterization trick, and so cannot be built into networks designed with the standard BDL approach.

Another example is the memory cells used in architectures such as neural Turing machines \citep{graves2014neural} and differentiable neural computers \citep{graves2016hybrid}.
The read-write memory cells must be carefully designed around the differentiability requirement.
The coagent framework could broaden the design space of architectures like these and allow researchers to explore new approaches, less restricted by the requirement that every single network component be differentiable.

\subsection{Hierarchical Networks and Exploration}
\label{sec:hierarchical}

Note that this use is closely related to, but distinct from, the use discussed in Section \ref{sec:temporalAbstraction}.

Coagent networks (asynchronous or not) allow for exploration at higher levels than the output layer: since networks have internal nodes that are themselves stochastic RL algorithms, internal components of the network that do not directly affect the action can explore, allowing for hierarchical architectures that facilitate exploration.

Most RL algorithms are limited by ability to explore only at the level of primitive actions (that is, at the level of their action space). 
That is, in order to learn via trial and error, they sometimes select actions they expect are suboptimal in order to gain additional information about whether, perhaps, those other actions are actually optimal.
Although there are many strategies for determining how and when to explore, these strategies almost all focus on exploring at the level of primitive actions (changing the action chosen at the level of the action space).

To see why this is a significant limitation, consider what this would mean if one's brain were to be replaced with any standard RL algorithm and one was tasked with learning to play chess (including learning to sit up in a chair, move the pieces, etc.).
This style of exploration equates to randomly stimulating different muscles--twitching one's finger or eye slightly for one time step.
Clearly learning to play chess requires exploration at a higher level--at the level of selecting different moves in the game.
Exploration at the level of primitive actions has thus far been successful because researchers carefully design the problems RL agents face so that exploration at the level of primitive actions is appropriate for the task at hand.
However, the inability of standard RL agents to learn to explore at higher levels limits them to relatively shallow capabilities, even when using deep networks, since these deep networks are held back by exploration only occurring at their outputs.

RL researchers have developed a variety of hierarchical reinforcement learning techniques, like the options framework, to enable agents to explore at higher levels--often with temporal abstraction (this allows for exploration to result in a completely different sequence of actions for an extended period of time, like selecting a different move in chess rather than twitching a finger differently for one time step).
However, hierarchical RL methods tend to be relatively heuristic, searching for temporally extended actions (called skills) that achieve heuristic goals like reaching distant parts of state space, or reaching bottleneck states.

Coagent networks present an exciting alternative to these approaches, by allowing for exploration at \emph{any} level within a parameterized policy.
For example, they can represent and train neural networks where every single node is stochastic, or hand-crafted network architectures that are equivalent to the options framework (see Section \ref{sec:temporalAbstraction}).

\section{Combining Coagent Learning Rules with Backpropagation}
\label{sec:combining_backprop}

While the coagent framework can be viewed as an alternative to backpropagation-based deep learning (BDL), the two approaches can also be combined to leverage the advantages of both.
Specifically, coagent learning rules can be combined with backpropagation by designing coagents which are themselves deep networks.
For these coagents, the gradient signal from the coagent learning rule is then propagated internally using backpropagation.

\begin{figure}[h]
    \includegraphics[width=1.0\linewidth]{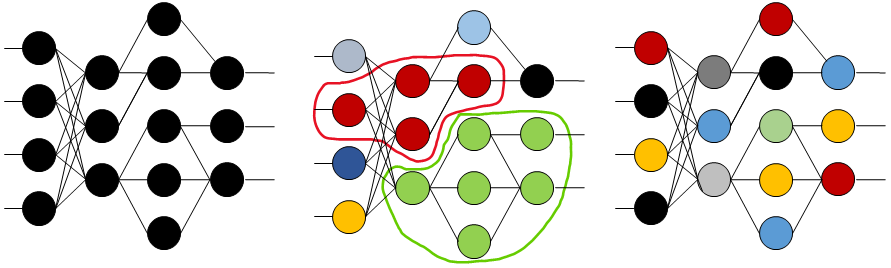}
    \centering
\caption{An illustrative example to show how coagent learning rules may be combined with backpropagation.
}
\label{fig:coagents_and_backprop}
\end{figure}

For an illustrative example, consider Figure \ref{fig:coagents_and_backprop}.
From left to right, these figures represent a standard (backpropagation-based) deep network, a coagent network that combines backpropagation with coagent learning rules, and a standard coagent network that does not use backpropagation.
In the left and middle networks, groups of connected nodes of the same color represent a deterministic deep network that can be trained with backpropagation. 
For the middle network, the two BDL networks are highlighted with a colored line.
In the middle and right networks, all other nodes represent coagents that are linear function approximators.

The cases of the left and right networks are straightforward.
The 15 linear coagents in the right network are simply trained with a standard coagent learning rule (based on theory such as Theorem \ref{thm:spat}).
The BDL network on the left is trained using the standard DRL approach: the learning signal is backpropagated through the network.

The middle network combines the two cases.
This network consists of seven coagents, two of which are deep networks (the red and green groups).
During training, for those two coagents, the respective gradient from the coagent learning rule is simply backpropagated through each coagent.

Intuition for this process can be built by considering a standard BDL network, such as the one on the left.
Suppose the network on the left is a DRL policy network trained using a policy-gradient (PG) algorithm (for example, PPO).
The PG algorithm provides a gradient, and the network backpropagates that gradient during an update.
The process for BDL coagents is identical:
The coagent learning rule provides a gradient, and the coagent backpropagates that gradient during an update.

In this way, practitioners can design coagents that are themselves deep networks, and potentially leverage the advantages of both DRL and coagent networks for a single network.
%

\section{Experiment Details}
\label{sec:experiment_details}

\subsection{Coagent Network Details}

The coagent network's learning algorithm is simply the ``Actor-Critic with Eligibility Traces (episodic)'' algorithm described \citet{sutton2018reinforcement}, without the $I$ component (which is a term in the actor's gradient calculation in the algorithm described in the text).
We normalize the coagent network's observations (similarly to what most DRL implementations do, including the PPO implementation we use).

The actor network is a two-layer network comprised of $40$ coagents.
Each coagent is a linear function approximator (the identity basis with a bias term) with softmax action selection.
The first layer is comprised of $32$ 224-parameter coagents, each of which outputs a binary signal to the second layer.
These binary signals are normalized to the range $[-1, 1]$, so each second-layer coagent sees $32$ inputs in \{-1, 1\}.
The second layer is comprised of eight 363-parameter coagents corresponding to the eight-dimensional action space of the environment.
Each action dimension is discretized into 11 possible actions for each of the eight second-layer coagents: $[-1, -0.32, -0.1, -0.032, -0.01, 0.0, 0.01, 0.032, 0.1, 0.32, 1]$.

The critic network is a BDL fully-connected network.
It has two hidden layers with 64 units each, and uses the tanh activation function.

Both the actor coagents and the critic use the Adam optimizer \citep{kingma2014adam}.

Hyperparameters used follow:
actor learning rate = 0.003844152898051233,
batch size actor = 128,
Adam beta1 for actor = 0.9,
Adam beta2 for actor = 0.9914815877866358,
$\lambda$ actor = 0.8206776332879618,
batch size critic = 32,
critic learning rate = 0.00018759145359217475,
Adam beta1 for critic = 0.0,
Adam beta2 for critic = 0.9999975597793732,
$\lambda$ critic = 0.0,
$\gamma$ = 0.9679813850692468,
Adam epsilon (actor and critic) = 0.00000001.

\subsection{SOTA DRL Baseline Details}
\label{sec:SOTA_details}

For the state-of-the-art (SOTA) returns that we provide for comparison, we trained $30$ BDL agents trained with proximal policy optimization (PPO).
We used an implementation based on the ppo\_continuous\_action implementation of the Clean-RL library \citep{cleanrl}.

We used the following hyperparameters (from the work of \citet{rl-zoo3}): learning rate = 1.90609e-05, gamma = $0.98$, gae\_lambda = $0.8$, update\_epochs = $10$ , norm\_adv = False, clip\_coef = 0.1, clip\_vloss = False, ent\_coef = 4.9646e-07, vf\_coef = 0.677239, max\_grad\_norm = 0.6, target\_kl = None, minibatch\_size = 32, adam\_eps = 1e-5, num\_envs = $6$, num\_steps = $512$.

\subsection{Other Experimental Details}

For Table \ref{table:results}, the reported standard deviations are calculated by finding the 1000 standard deviations (corresponding to the 1000 episodes) across the (30 or 100) trials, and then reporting the mean of those 1000 standard deviations.

\end{document}